\documentclass[conference]{IEEEtran}
\IEEEoverridecommandlockouts

\usepackage{cite}
\usepackage{amsmath,amssymb,amsfonts,amsthm}
\usepackage{algorithmic}
\usepackage{graphicx}
\usepackage{textcomp}
\usepackage{xcolor}
\usepackage{enumitem}
\usepackage{hyperref}
\def\BibTeX{{\rm B\kern-.05em{\sc i\kern-.025em b}\kern-.08em
    T\kern-.1667em\lower.7ex\hbox{E}\kern-.125emX}}

\DeclareMathOperator*{\argmin}{argmin}
\newcommand\tdots{\hbox to 0.7em{.\hss.\hss.}\,}
\let\originalleft\left
\let\originalright\right
\renewcommand{\left}{\mathopen{}\mathclose\bgroup\originalleft}
\renewcommand{\right}{\aftergroup\egroup\originalright}
\let\originalmid\mid
\renewcommand{\mid}{\!\originalmid\!}

\newlist{researchquestions}{enumerate}{1}
\setlist[researchquestions]{label*=\textbf{RQ\arabic*}}

\newlist{experimentquestions}{enumerate}{1}
\setlist[experimentquestions]{label*=\textbf{Q\arabic*}}

\theoremstyle{plain}
\newtheorem{theorem}{Theorem}[section]
\newtheorem{proposition}[theorem]{Proposition}
\newtheorem{proposition*}[theorem]{Proposition}
\newtheorem{lemma}[theorem]{Lemma}

\theoremstyle{definition}

\newtheorem{example}{Example}
\theoremstyle{remark}


    \setcounter{topnumber}{4}
    \setcounter{bottomnumber}{4}
    \setcounter{totalnumber}{4}     
    \setcounter{dbltopnumber}{4}    
\textfloatsep=5.0pt plus 2.0pt minus 4.0pt

%
\usepackage{tikz}
\usepackage{textcomp}
\usepackage{hyperref}
\usepackage{lipsum}

\newcommand\copyrighttext{%
  \footnotesize \textcopyright 20XX IEEE. Personal use of this material is permitted.
  Permission from IEEE must be obtained for all other uses, in any current or future
  media, including reprinting/republishing this material for advertising or promotional
  purposes, creating new collective works, for resale or redistribution to servers or
  lists, or reuse of any copyrighted component of this work in other works.
}
\newcommand\copyrightnotice{%
\begin{tikzpicture}[remember picture,overlay]
\node[anchor=south,yshift=10pt] at (current page.south) {\fbox{\parbox{\dimexpr\textwidth-\fboxsep-\fboxrule\relax}{\copyrighttext}}};
\end{tikzpicture}%
}

\begin{document}
\thispagestyle{plain}  
\pagestyle{plain} 

\title{
Theoretical Evaluation of Asymmetric Shapley Values for Root-Cause Analysis
\thanks{\hspace{-8pt}Supported by the European Union project RRF-2.3.1-21-2022-00004 within the framework of the Artificial Intelligence National Laboratory Program.}
}

\newcommand*{\authorblind}[1]{{}}

\author{\IEEEauthorblockN{Domokos M. Kelen}
\IEEEauthorblockA{%
\textit{HUN-REN SZTAKI\IEEEauthorrefmark{1}\thanks{\hspace{-8pt} \IEEEauthorrefmark{1}Hungarian Research Network, Institute for Computer Science and Control}}\!,\\
Budapest, Hungary\\
kdomokos@sztaki.hu
}
\and
\IEEEauthorblockN{Mihály Petreczky}
\IEEEauthorblockA{%
\textit{CNRS, University Lille}\\
Lille, France \\
mpetrec@gmail.com}
\and
\IEEEauthorblockN{Péter Kersch}
\IEEEauthorblockA{%
\textit{Ericsson Hungary}\\
Budapest, Hungary \\
peter.kersch@ericsson.com}
\and
\IEEEauthorblockN{András A. Benczúr}
\IEEEauthorblockA{%
\textit{HUN-REN SZTAKI\IEEEauthorrefmark{1}}\!,\\
Budapest, Hungary\\
benczur@sztaki.hu
}
}

\maketitle
\copyrightnotice

\begin{abstract}
In this work, we examine Asymmetric Shapley Values (ASV), a variant of the popular SHAP additive local explanation method. ASV proposes a way to improve model explanations incorporating known causal relations between variables, and is also considered as a way to test for unfair discrimination in model predictions.
Unexplored in previous literature, relaxing symmetry in Shapley values can have counter-intuitive consequences for model explanation.
To better understand the method, we first show how local contributions correspond to global contributions of variance reduction. Using variance, we demonstrate multiple cases where ASV yields counter-intuitive attributions, arguably producing incorrect results for root-cause analysis.
Second, we identify generalized additive models (GAM) as a restricted class for which ASV exhibits desirable properties. We support our arguments by proving multiple theoretical results about the method. 
Finally, we demonstrate the use of asymmetric attributions on multiple real-world datasets, comparing the results with and without restricted model families using gradient boosting and deep learning models.
\end{abstract}

\begin{IEEEkeywords}
explainability, SHAP, causality
\end{IEEEkeywords}

\section{Introduction}

Removal-based feature attribution methods have gained huge popularity in recent years.
To gain insight into machine learning models, they calculate expectations of model predictions with and without revealing certain feature values and examine the change in expectation. 
Methods like SHAP~\cite{lundberg2017unified} yield easy-to-interpret explanations for model predictions by attributing parts of the prediction to individual features.

In some cases, the prediction can be made from either of multiple predictors with overlapping information content, such as when multiple variables in a causal chain are present in the dataset.
The Asymmetric Shapley Values (ASV)~\cite{frye2020asymmetric} framework promises to solve this problem by relaxing the symmetry property in the original Shapley formula.
However, as we show in this paper, relaxing symmetry can yield unexpected results in case of complex interactions between variables. 

A method to handle overlapping information content
is essential in the case of root cause analysis, where the explanation method is used for identifying the underlying issues for a certain prediction value, with multiple complex causal relationships among different levels of observations. For example, in radio telecommunications, base station configuration affects load and interference, which then affect the radio channel quality and eventually the user experience. We can try to assert the cause of user experience degradation through local explanations using SHAP; however, the degradation can most likely be predicted from either of multiple variables in the causal chain. 
In this case, we prefer to prioritize the root (or \emph{distal}) causes over more immediate ones in the explanation, and ASV provides us with a convenient way to do so.

In other cases, such as detecting unfair discrimination, the same framework can be used to instead prefer the immediate (or \emph{proximate}) causes. In social sciences, family attributes affect education, which determines income and poverty, which eventually predict crime rate. One might prefer the explanations to attribute shared contribution to resolving variables that mediate the effects of sensitive attributes~\cite{frye2020asymmetric}.

The previous two use cases primarily consider the effect of shared contributions, where multiple predictors provide redundant information about a target variable through correlation or other nonlinear means. However, interacting variables can also result in additional information about the target variable that cannot be inferred from either of the predictor variables alone but is revealed by the variables when observed together. Such interactions, which we call \emph{complex interactions}, can lead to arguably incorrect results in the case of distal attribution schemes and disproportionate contributions in the case of proximate attribution schemes. It is important to consider the consequences of different kinds of interaction in various use cases, as the attributions produced by the method can be misleading without a proper understanding of the different ways that variables can interact.

In this work, we conduct a theoretical analysis of ASV, focusing on regression, and formalizing the effects of relaxing symmetry in terms of variance reduction of the target variable. We seek answers to the following research questions:
\begin{researchquestions}[itemindent=1em]
    \item  Can we reason about the attributions of ASV formally?\label{rq1}
    \item Is ASV a reliable method for root-cause analysis? \label{rq2}
    \item Can we avoid undesired behavior by using GAMs?  \label{rq3}
\end{researchquestions}
In our analysis, we use the framework of variance reduction to show how ASV can yield counter-intuitive results in the presence of nonlinearity or even simple non-additive variable interactions.
As a solution, we propose applying ASV in conjunction with generalized additive models (GAMs). In Theorem~\ref{st:redundancy}, our main result proves a sufficient condition for correct behavior when ASV is applied to GAMs.

Our paper is structured as follows. In Section~\ref{sec:related}, we summarize related literature. In Section~\ref{sec:background}, we recall the definitions of SHAP and ASV. In Section~\ref{sec:varred}, we describe how the behavior of local explanations can be studied through the framework of variance reduction. In Section~\ref{sec:asymmetric}, we characterize the behavior of ASV, and present theoretical examples where ASV yields counter-intuitive results. In Section~\ref{sec:gam}, we show that many of the listed counter-intuitive behaviors vanish when restricting the usage of ASV to GAMs. In Section~\ref{sec:classification}, we discuss generalization to classification. Finally, in Section~\ref{sec:experiments}, we demonstrate our findings on real-world datasets. 

\section{Related literature}\label{sec:related}

SHAP, introduced in~\cite{lundberg2017unified}, has sparked a surge of research in recent years into related explainability methods. In this paper, we study Asymmetric Shapley Values~\cite{frye2020asymmetric},
which provides a way to account for known causal relationships in the dataset when calculating contributions.
We will enumerate a number of other works that build upon SHAP to account for causality in attributions; however, we are not aware of works on formally assessing the suitability of ASV for root-cause analysis.

Results on measuring causal contributions~\cite{heskes2020causal,wang2021shapley,jung2022measuring,li2022explanatory} enforce restricting explanation to causal effects by relying on Pearl's do-calculus~\cite{pearl2012calculus}.
In contrast, our pathological examples (Section~\ref{sec:asymmetric}) stem from the combination of asymmetry and nonlinearity, which are orthogonal to the question of direct and indirect causal effects. 
Do-calculus-based methods combine well with ours by detecting confounders while our results detect complex interactions.
Shapley flow~\cite{wang2021shapley}
extends the do-calculus idea by assigning contributions to causal edges instead of variables.
Causal Shapley Values~\cite{heskes2020causal} decompose the causal effects into direct and indirect effects, which can then be combined both with symmetric SHAP and ASV~\cite[Fig.~1]{heskes2020causal}, hence can leverage on our results.
Finally, the main contribution in~\cite{jung2022measuring} consists of criteria to efficiently infer do-calculus values from fixed measurement data, which is non-trivial in our use cases with no intervention opportunity when training the model.

In a completely different approach to model explanation, in~\cite{agarwal2021neural} the use of generalized additive models (GAM)~\cite{hastie2017generalized} is proposed to combine the expressivity of a general model with the inherent intelligibility of GAMs but without providing local post-hoc explanations.  
The relationship of SHAP and GAMs are studied in~\cite{bordt2023shapley}, which introduces $n$-Shapley Values to explain $n$-wise symmetric interactions.
As a step beyond, in Section~\ref{sec:gam} we formalize the behavior ASV when applied in conjunction with GAMs vs.~arbitrary models.

We study ASV by explaining the variance reduction of the target variable in the SHAP framework. Our variance-reduction-based approach is equivalent in expectation to SAGE~\cite{covert2020understanding,frye2021shapley} with the MSE loss function. In general, analysis of variance (ANOVA) and variance-based sensitivity analysis are popular approaches~\cite{owen2014sobol, HUANG199849, herren2022statistical, da2021kernel}, which we use here to study ASV. Note that our ASV variance reduction explanation does not modify the validity of the other SHAP axioms~\cite{lundberg2017unified} beyond symmetry.

\section{Background}\label{sec:background}
\subsection{Shapley Additive Explanations (SHAP)}
In~\cite{lundberg2017unified}, SHAP is defined as a way to explain a specific prediction of a machine learning model by assigning contribution values to different input features of the model. More specifically, given model $f$, feature set $\{X_s \mid s\in\mathbb{S}\}$, and a specific point $x$ from the dataset, the contribution $\phi_j$ of feature $j\in \mathbb{S}$ can be defined as
\begin{gather}\label{eq:original_shap}
    \phi_j(f, x)=\sum_{S\subseteq \mathbb{S}\setminus\{j\}}\frac{|S|!\left(|\mathbb{S}|-|S|-1\right)!}{|\mathbb{S}|!}\;\varphi_{j}(f, x, S)\textrm{,}\\
    \label{eq:valuefunc}
     \textrm{where\;}\varphi_j(f, x, S)=v_{f(x)}(S\cup \{j\})-v_{f(x)}(S)\textrm{.}
\end{gather}
Here $v_{f(x)}(S)$ is called the \emph{value function} and represents the function of evaluating the model $f$ at point $x$ while only using features from the coalition $S$. The evaluation is usually done by taking the expectation of $f(x)$ conditional on the in-coalition feature values $S$:
\begin{gather}
    \label{eq:valufunc_expectation}
    v_{f(x)}(S) = E\left[f(X)\mid X_s=x_s, s\in S\right]\textrm{.}
\end{gather}
With $\phi_0  = E(f(X))$, the feature contributions
sum up to the prediction value:
\begin{equation}\label{eq:shap_additive}
    \sum_{j=0}^{|\mathbb{S}|} \phi_j (f,x) = f(x).
\end{equation}

We note that two commonly used variants of SHAP are conditional
and marginal SHAP
~\cite{model_or_data}. Similar to~\cite{frye2020asymmetric}, we focus on \emph{conditional} SHAP, where the dependencies between features are accounted for in the expected value in~\eqref{eq:valufunc_expectation}.
\subsection{Asymmetric Shapley Values}
When the prediction power of two variables together is not equal to the sum of their individual prediction powers, then these variables are said to interact. The original weighting scheme of Shapley values, described in~\eqref{eq:original_shap}, guarantees such interactions to be distributed equally between variables. In \cite{frye2020asymmetric}, ASV is introduced as a way to prefer certain variables over others by assigning interactions asymmetrically. This can be useful for example in the presence of causal relationships, where one might prefer assigning shared contribution to either root causes or immediate causes.
Formally, Asymmetric Shapley Values are defined as
\begin{align}\label{eq:expectation_weighted}
    \phi_j^\omega(f(x))=\sum_{\pi\in\Pi}\omega(\pi)\,\varphi_j(f, x, R_j)\textrm{,}
\end{align}
where $\Pi$ is the set of all possible permutations of the model features, ${R_j=\{i: \pi(i)<\pi(j)\}}$, $\omega$ is a weighting over permutations, and $\varphi_j(f, x, R_i)$ is defined in~\eqref{eq:valuefunc}.
The definition allows any weighting function, for example, the distal (or root cause) function~\cite{frye2020asymmetric} 
\begin{equation}\label{eq:distal}
    \omega_{distal}(\pi)\propto
    \begin{cases}
    1&\parbox[m]{.55\columnwidth}{if $\pi(i)<\pi(j)$ for all $i,j$ where $i$ is a causal ancestor of $j$}\vspace{0.4em}\\
    0&\textrm{otherwise,}
    \end{cases}
\end{equation}
which essentially means that causal ancestors are always revealed before variables affected by them.
The intuition given behind this approach is exemplified by assuming two identical $X_1, X_2$, however with $X_1$ being a causal ancestor of $X_2$. 
In this case, one might want to attribute all importance to $X_1$ instead of sharing it evenly.

The difference between ASV and SHAP is how variable interactions are handled. Two features $i$ and $j$ are said to interact when
\begin{equation}\label{eq:shap_interaction}
    v_{f(x)}(\{i\})+v_{f(x)}(\{j\})\neq v_{f(x)}(\{i,j\}),
\end{equation}
i.e. the effect of revealing the value of both at the same time to the model can not be additively predicted from revealing them separately. This can be the case, e.g., when they contain redundant information. In such cases, the difference between the left and right-hand side of~\eqref{eq:shap_interaction} needs to be added to $\phi_i(f, x)$ and $\phi_j(f, x)$ such that~\eqref{eq:shap_additive} holds. SHAP distributes the interaction value uniformly between the interacting variables, while ASV assigns it to the variable revealed last in each specific permutation $\pi$ of \eqref{eq:expectation_weighted}.

\section{Variance Reduction in the SHAP framework}\label{sec:varred}
The correctness of explanation methods is hard to quantify or benchmark, as there is rarely any ground truth to compare against. In the case of SHAP and ASV, even general behavior is hard to reason about, as these methods explain model predictions locally, i.e., each $f(x)$ prediction separately in the input point~$x$. A practical approach is to study them based on aggregate behavior, e.g., in~\cite{frye2020asymmetric}, the authors observe the behavior of ASV through the expectation of $\phi_j(f, x)$ over $x\sim X$. However, it is important to pay attention to how well the average reflects local behavior, as contribution values can have both positive and negative values.

In our case, the primary goal is to evaluate ASV for the regression task. Optimizing for mean squared error results in the model $f(x)$ approximating the conditional expectation of the target $T$, i.e., $f(X)\approx E[T\mid X]$, also implying
\begin{gather}\label{eq:model_cond_exp}
    E[f(X)\mid X_S]\approx E[E[T\mid X]\mid X_S]= E[T\mid X_S]
\end{gather}
when using conditional SHAP,
where $X_S = \{X_s\mid s\in S\}$.

As a result, however, using the average of local contribution values is pointless, as it is guaranteed to approximate zero. To see this, notice first that $\phi_0=E[f(X)]$ as in~\eqref{eq:shap_additive}, and second that the expectation $E[\phi_j(f, X)]=0$ for all $j\neq 0$, since for any $S$, $f$, and $x$, due to the law of total expectation,
\begin{gather}
    E[v_{f(X)}(S)]=E\left[E\left[f(X)\mid  X_S\right] \right]=E[f(X)]\textrm{,}
\end{gather}
implying $E[\varphi_j(f,x,S)]=0$ in turn for any $j, S$ in \eqref{eq:valuefunc}.

As a solution, some authors\cite{man2021best,scavuzzo2022feature} as well as the SHAP Python package consider features with large absolute Shapley values important. The average of the absolute contribution values $|\phi|$ is also used as feature importance in~\cite{lundberg2018consistent,molnar2018guide}.
However, the average absolute contribution value does not have a theoretic foundation, which again makes it hard to formally reason about the behavior of the method.

Instead, to answer~\ref{rq1}, we propose observing the changes in the variance of the target variable $T$ with different feature sets revealed to the model. We first show that this value is strongly tied to the local behavior of the model. Let us define a new value function $w$ to replace $v$ of~\eqref{eq:valuefunc}:
\begin{gather}\label{eq:w_value_func}
    w_{f(x),t}(S) \stackrel{def}{=} (t-v_{f(x)}(S))^2.
\end{gather}
On the one hand, the behavior of $w$ directly corresponds to the behavior of $v$, as it is just a simple transformation of the latter. On the other hand, averaging $w_{f(x),t}(S)$ over the dataset for a given $S$ approximates the variance of $T$ conditioned on features from $S$, called the residual sum of squares:
\begin{gather}
    E[w_{f(X),T}(S)]=E[(T-v_{f(X)}(S))^2]\\
    =E\left[(T-E(T \mid X_S))^2\right]\label{eq:condvar}
\end{gather}
where $X_S=\{X_s\}_{s \in S}$. The value in \eqref{eq:condvar} is also called conditional variance~\cite{spanos2019probability}. Notice that
\begin{gather}
\phi_0=E[w_{f(X),T}(\emptyset)]=E\left[(T-E(T))^2\right]=\sigma^2(T),
\end{gather}
i.e., the average contribution of the empty set $\phi_0$ becomes the variance itself, and that
\begin{gather}
R^2\cdot \sigma^2(T)=\sigma^2(T)-\sigma^2(T-E(T\mid X_S))=\label{eq:r2_1}\\
=E[w_{f(X),T}(\emptyset)]-E[w_{f(X),T}(S)],\label{eq:r2_2}
\end{gather}
where $R^2$ is the \emph{coefficient of determination} from statistics.

Since \eqref{eq:original_shap} and \eqref{eq:valuefunc} are linear transformations of the value function, all of the above means that when SHAP is used to explain the value function $w$ of \eqref{eq:w_value_func} locally, then the averages of contributions explain the variance reduction of the target variable globally in the SHAP framework. The contribution of the empty set, i.e., $\phi_0$, is equal to the unconditional variance, while the addition of predictor features lowers this variance, such that in the end the sum of the contributions equals ${E\left[(T-E(T \mid X_S))^2\right]}$. The resulting contributions are equivalent in expectation to the global explanations provided by SAGE~\cite{covert2020understanding} for the squared error loss function.

In what follows, we denote $R^2$ unnormalized as
\begin{equation}\label{eq:linear_info}
    L_{\scriptscriptstyle T}(X) \stackrel{def}{=} \sigma^2(T) - \sigma^2(T-E[T\mid X]),
\end{equation}
also allowing multiple variables in place of $X$, denoted as, e.g., $L_{\scriptscriptstyle T}(X,Y)$. The value of $L_{\scriptscriptstyle T}(X)$ can be interpreted as \emph{the predictive power of $X$} when no other variables are present. Note that a positive amount of variance reduction is realized as a \emph{negative contribution} of the variable in the SHAP framework: when averaged, summing $\phi_0$ to the contribution values equals the reduced variance of the target.

We also propose a definition to characterize the relation of the variance reduction of two attributes $X$ and $Y$ separately as opposed to the pair $X,Y$ together, similar to the classic Shapley interaction value~\eqref{eq:shap_interaction}. We define
\begin{equation}
    W_{\scriptscriptstyle T}(X;Y) \stackrel{def}{=} L_{\scriptscriptstyle T}(X,Y) - L_{\scriptscriptstyle T}(X) - L_{\scriptscriptstyle T}(Y) ,\label{eq:linear_interaction_info}
\end{equation}
the \emph{interaction of variance reduction}. The value $L_{\scriptscriptstyle T}(X)$ is guaranteed to be positive, while $W_{\scriptscriptstyle T}(X;Y)$ can be either positive or negative.

\section{Analysis of ASV through variance reduction}\label{sec:asymmetric}
Using the notation introduced in Section~\ref{sec:varred}, we can now answer~\ref{rq1}. We can express the contributions assigned by ASV in terms of variance reduction as follows.
\begin{proposition}\label{prop:asv}
    For a given permutation $\pi$, using $w$ of~\eqref{eq:w_value_func} in place of $v$ in \eqref{eq:valuefunc}, and ${R_j=\{i: \pi(i)<\pi(j)\}}$,
    \begin{gather}
        -E[\varphi_j(f, X, R_j)] 
        = L_{\scriptscriptstyle T}(X_{\pi(j)})+W_{\scriptscriptstyle T}(X_{\pi(j)}; R_j).
    \end{gather}
\end{proposition}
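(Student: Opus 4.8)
The plan is to collapse the local marginal contribution into an expected conditional variance, and then read off the right-hand side directly from the definitions \eqref{eq:linear_info} and \eqref{eq:linear_interaction_info}.

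First I would substitute $w$ for $v$ in \eqref{eq:valuefunc} at the coalition $S = R_j$, writing
\[
  \varphi_j(f, X, R_j) = w_{f(X),T}(R_j \cup \{j\}) - w_{f(X),T}(R_j),
\]
and take the expectation over $(X,T)$. By linearity together with the identity $E[w_{f(X),T}(S)] = \sigma^2(T - E[T\mid X_S])$ — which is exactly \eqref{eq:condvar}, using the law of total expectation to note that $T - E[T\mid X_S]$ has zero mean — this yields
\[
  -E[\varphi_j(f, X, R_j)] = \sigma^2\!\left(T - E[T\mid X_{R_j}]\right) - \sigma^2\!\left(T - E[T\mid X_{R_j \cup \{j\}}]\right).
\]

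Second, I would add and subtract $\sigma^2(T)$ so that each term takes the unnormalized-$R^2$ form of \eqref{eq:linear_info}. Identifying the coalition $R_j \cup \{j\}$ with the joint predictor $(X_{\pi(j)}, R_j)$, this rewrites the difference as
\[
  -E[\varphi_j(f, X, R_j)] = L_{\scriptscriptstyle T}(X_{\pi(j)}, R_j) - L_{\scriptscriptstyle T}(R_j).
\]
Finally, I would apply the definition \eqref{eq:linear_interaction_info}, i.e.\ $W_{\scriptscriptstyle T}(X_{\pi(j)}; R_j) = L_{\scriptscriptstyle T}(X_{\pi(j)}, R_j) - L_{\scriptscriptstyle T}(X_{\pi(j)}) - L_{\scriptscriptstyle T}(R_j)$, to substitute for $L_{\scriptscriptstyle T}(X_{\pi(j)}, R_j)$; the two copies of $L_{\scriptscriptstyle T}(R_j)$ cancel and leave precisely $L_{\scriptscriptstyle T}(X_{\pi(j)}) + W_{\scriptscriptstyle T}(X_{\pi(j)}; R_j)$, as claimed.

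The argument is in essence a two-line telescoping identity, so I do not expect a genuinely hard step. The only points requiring care are the expectation-swapping in the first step — confirming that averaging the pointwise squared-error value function $w$ over the data collapses to the conditional variance $\sigma^2(T - E[T\mid X_S])$ and no longer depends on the evaluation point $x$ — and the bookkeeping that treats the coalition $R_j \cup \{j\}$ as the pair of predictors $(X_{\pi(j)}, R_j)$, so that the definitions of $L_{\scriptscriptstyle T}$ and $W_{\scriptscriptstyle T}$ apply verbatim.
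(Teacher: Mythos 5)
Your proof is correct and follows exactly the route the paper intends: the paper's own proof is a one-line appeal to the definitions in \eqref{eq:valuefunc}, \eqref{eq:w_value_func}, \eqref{eq:linear_info}, \eqref{eq:linear_interaction_info} together with \eqref{eq:r2_1}--\eqref{eq:r2_2}, and your write-up simply spells out those same steps (averaging $w$ to get conditional variances, recentering by $\sigma^2(T)$, and substituting the definition of $W_{\scriptscriptstyle T}$). The only detail worth retaining from your care points is the zero-mean observation that turns $E[(T-E[T\mid X_S])^2]$ into a variance, which the paper leaves implicit.
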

\begin{proof}
    From the definitions in~\eqref{eq:valuefunc},~\eqref{eq:w_value_func},~\eqref{eq:linear_info},~\eqref{eq:linear_interaction_info} and observing \eqref{eq:r2_1} and \eqref{eq:r2_2}, the proposition immediately follows.
\end{proof}

To illustrate the meaning of Proposition~\ref{prop:asv}, recall the distal weighting function $\omega$ of~\eqref{eq:distal}. Along with~\eqref{eq:expectation_weighted}, it filters out any $\pi$ in which the causal ordering is flipped for any two variables. For the remaining $\pi$, variable $X_j$ gets contribution negatively proportional to the variance of $T$ that it explains, plus the interaction value $W$ between it and its predecessors. The total contribution of $X_j$ is then an average over every such $\pi$.

Next, we show three examples where ASV can be argued to produce counter-intuitive results. ASV reveals variables to the model in the order specified by the permutation $\pi$ in order to prefer the variables that are revealed earlier in the explanation. This approach seems intuitive, as variables earlier in the permutation get their full contribution, while later variables are assigned leftover prediction power.
However, in some cases, the intuition fails. In the three examples below, we reveal two variables in the order $(X_1, X_2)$, and show that ASV assigns disproportionate contributions to $X_2$.

\begin{example}[Pairwise independence]\label{ex:pairwise}
Let $X_1$ and $X_2$ be pairwise independent of $T$, however with $T=f(X_1, X_2)$ for some $f$. With the permutation $(X_1, X_2)$, $\textrm{ASV}$ assigns all contribution to $X_2$ for predicting $T$.
\end{example}
\noindent A pair of variables can be pairwise independent, but at the same time not mutually independent together with the target variable.

\begin{example}[Nonlinearity]  \label{ex:nonlinearity}
For $X_1,X_2\sim N(0,1)$ independent Gaussians, let
    $T=(2X_1 + 2X_2)^2$\textrm{.}
Both variables contribute the same amount, yet with the permutation $(X_1, X_2)$, $\textrm{ASV}$ assigns three times the importance of $X_1$ to $X_2$ when measured using variance reduction. For an exact calculation of contributions, see Appendix~\ref{apx:example_2}.
\end{example}

\begin{example}[Non-additive effects]\label{ex:nonadditive} Let
$X_1$ and $X_2$ be as in Example~\ref{ex:nonlinearity} and $T = X_1\cdot X_2$.
Here $E[T\mid X_1]=0$, therefore with the permutation $(X_1, X_2)$, $\textrm{ASV}$ assigns all contribution to $X_2$ for predicting $T$, even though both variables contributed equally to $T$.
\end{example}

In all three of these examples, the main source of counter-intuitive behavior is that $X_2$ gets assigned more contribution \emph{because} it is introduced after $X_1$, more contribution than if it was the only predictor used, which goes directly against the intuition of the ordering of the predictors in $\pi$. In the examples, the effects are not the result of causal relationships between $X_1$ and $X_2$, rather they are artifacts of relaxing symmetry itself. However, real-world relationships can contain a combination of multiple effects, including causal and nonlinear ones. As an arbitrary causal example, we could have, e.g., $X_2=X_1+2Y$ with $Y\sim N(0,1)$ and $T=(X_1+X_2)^2$, resulting in the same incorrect contributions as in Example~\ref{ex:nonlinearity}.

Thus, answering~\ref{rq2}, we conclude that no, ASV is not always a reliable way of assigning contributions in root-cause analysis, as the effects demonstrated by Examples~\ref{ex:pairwise},~\ref{ex:nonlinearity}, and~\ref{ex:nonadditive} can result in proximate instead of root causes getting a large share of the contributions. The counter-intuitive behavior is due to the fact that the inequality
\begin{equation}\label{eq:fail_leq}
    W_{\scriptscriptstyle T}(X,Y)=L_{\scriptscriptstyle T}(X,Y)-L_{\scriptscriptstyle T}(X)-L_{\scriptscriptstyle T}(Y)\stackrel{?}{\leq} 0
\end{equation}
\emph{does not} hold in general: the prediction power of using two variables at the same time can be greater than the sum of their prediction powers separately, since the interaction between variables can be quite complex. Thus we can have two kinds of interaction between variables: interactions that result in the combined prediction power being less, and interactions that result in the combined prediction power being more than the sum of the individual prediction powers of the variables. We call the former kind of interaction \emph{redundant information} and the latter \emph{complex interaction}. The actual measured $W_T$ value is the sum of interactions of different kinds.

\section{ASV on Generalized Additive Models}\label{sec:gam}
From an interpretability standpoint, complex interactions are undesirable: they cannot be attributed cleanly to any of the involved variables. Rather, complex interactions are inherently the property of the variables being used together.
However,
attributing redundant information in causal settings is straightforward both in theory and in practice, as described and demonstrated by ASV. In this section, we propose using generalized additive models~\cite{hastie2017generalized} (GAMs) in an attempt to exclude undesirable complex interactions. We study the behavior of ASV when applied to GAMs, and prove that ASV exhibits certain desirable properties when applied on GAMs.

GAMs are a special class of prediction model, which can be written as a sum of smooth functions of different features:
\begin{equation}\label{eq:GAM_def}
    g(E[T\mid X])\approx f_0 + f_1(X_1)+\tdots+f_n(X_n),
\end{equation}
where $g$ is called the link function. Observe that $f_0$ is nonessential, as its effects can be assimilated within $g$ or~$f_i$. The explainability properties of GAMs have been studied before~\cite{agarwal2021neural,lou2012intelligible}, as well as various ways of approximating them using machine learning models. While these restricted models are strictly less powerful than their unrestricted counterparts, sacrificing some prediction power for interpretability can be reasonable in many domains and applications where the trustworthiness of the results is critical~\cite{agarwal2021neural}.

Note that GAMs themselves are additive, for which marginal SHAP~\cite{lundberg2017unified} guarantees that contributions reflect the outputs of individual $f_i$ functions. The same is, however, \emph{not} true for conditional SHAP~\cite{model_or_data}, where~\eqref{eq:model_cond_exp} remains true. Illustrating this point, assume two identical predictor variables. As also observed in~\cite{bordt2023shapley}, a GAM is free to choose either of them for predicting $T$. Therefore, hiding one of the variables could result in the model losing information when using marginal SHAP.
In contrast,
in conditional SHAP, the expectation is forced to account for either of them being present
through their joint distribution.

To reason about ASV and GAMs formally, we propose the following definitions. We define the \emph{restricted} conditional expectation of $T$ given $X_1,\tdots,X_n$ as
\begin{equation}\label{eq:restricted_conditional_expectation}
    E^r[T\mid X_1, \tdots, X_n]\stackrel{def}{=}\argmin_{\{f_i\}} \sigma^2\!\left(T-\sum_i f_i(X_i)\right),
\end{equation}
where $f_i$ are measurable and $E[f_i(X_i)^2]< \infty$.
Equation~\eqref{eq:restricted_conditional_expectation} gives the minimum variance predictor of $T$ that can be written as the sum of functions of $X_i$, i.e., as a GAM. We use the notation $E^r$ as a contrast to regular conditional expectation, which is equivalent to using a single measurable function over all $X_i$ to minimize variance~\cite{brockwell2009time}.
The function $E^r$ can be interpreted as the best possible GAM while using the identity link function. Notice that with only one predictor variable, the definitions of functions $E$ and $E^r$ are equivalent. For a discussion of the existence of $E^r$, see Appendix~\ref{apx:existence}.

Further, from the definition of~\eqref{eq:restricted_conditional_expectation}, we can derive matching definitions for restricted versions of $L_T$ and $W_T$:
\begin{align}\label{eq:restricted_linear_info}
    L_{\scriptscriptstyle T}^r(X) &\stackrel{def}{=} \sigma^2(T) - \sigma^2(T-E^r[T\mid X])\\
    W_{\scriptscriptstyle T}^r(X;Y) &\stackrel{def}{=} L_{\scriptscriptstyle T}^r(X,Y) - L_{\scriptscriptstyle T}^r(X) - L_{\scriptscriptstyle T}^r(Y) .\label{eq:restricted_linear_interaction_info}
\end{align}
Next, we prove that ASV exhibits a number of desirable properties when applied in combination with GAMs. Our first theorem refers back to Example~\ref{ex:pairwise}.
\begin{theorem}[Additivity of contributions]\label{st:cond_indep}
    If $X,Y$ are independent, then
        $L^r_T(X) + L^r_T(Y)=L^r_T(X,Y)$.
\end{theorem}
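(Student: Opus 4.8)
The plan is to recast all restricted quantities as orthogonal projections in the Hilbert space $L^2$ of square-integrable random variables, equipped with the inner product $\langle A,B\rangle = E[AB]$ (which, on centered variables, coincides with covariance); I write $\|\cdot\|$ for the induced norm. Let $\tilde T = T - E[T]$ be the centered target, let $\mathcal H_X$ be the closed subspace of centered square-integrable measurable functions of $X$, and define $\mathcal H_Y$ analogously. Since variance is translation-invariant, the minimization in~\eqref{eq:restricted_conditional_expectation} over additive predictors $f(X)+g(Y)$ is the same as minimizing the squared $L^2$-distance from $\tilde T$ to the subspace $\mathcal H_X + \mathcal H_Y$; hence the centered $E^r[T\mid X,Y]$ is exactly the orthogonal projection $P_{\mathcal H_X+\mathcal H_Y}\tilde T$. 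For a single variable the paper already notes $E^r[T\mid X]=E[T\mid X]$, so the projection of $\tilde T$ onto $\mathcal H_X$ is $E[T\mid X]-E[T]$.

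First I would record that each $L^r$ term is a squared projection norm. The orthogonal decomposition $\tilde T = P_{\mathcal H_X}\tilde T + (\tilde T - P_{\mathcal H_X}\tilde T)$ gives, via Pythagoras, $\sigma^2(T) = \sigma^2(E[T\mid X]) + \sigma^2(T-E[T\mid X])$, so that $L^r_T(X) = \|P_{\mathcal H_X}\tilde T\|^2$; identically $L^r_T(Y) = \|P_{\mathcal H_Y}\tilde T\|^2$ and $L^r_T(X,Y) = \|P_{\mathcal H_X+\mathcal H_Y}\tilde T\|^2$. Thus the claimed identity is equivalent to additivity of these squared norms.

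The key step, and the only place where independence is used, is to show that $\mathcal H_X$ and $\mathcal H_Y$ are orthogonal subspaces: for centered $f(X)$ and $g(Y)$, independence yields $\langle f(X),g(Y)\rangle = E[f(X)g(Y)] = E[f(X)]\,E[g(Y)] = 0$. Orthogonality forces the projection onto the direct sum to split, $P_{\mathcal H_X+\mathcal H_Y}\tilde T = P_{\mathcal H_X}\tilde T + P_{\mathcal H_Y}\tilde T$, and a final application of Pythagoras to these orthogonal components gives $\|P_{\mathcal H_X+\mathcal H_Y}\tilde T\|^2 = \|P_{\mathcal H_X}\tilde T\|^2 + \|P_{\mathcal H_Y}\tilde T\|^2$, i.e. $L^r_T(X,Y) = L^r_T(X) + L^r_T(Y)$.

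The main obstacle is one of rigor rather than of ideas: I must ensure the relevant projections exist, i.e. that $\mathcal H_X$, $\mathcal H_Y$, and their sum are closed subspaces. Closedness of the single-variable spaces is classical (they are the ranges of the conditional-expectation projections, modulo constants), and two orthogonal closed subspaces have a closed sum, so orthogonality delivers this for free. Existence of $E^r$ in the general case is in any event deferred to Appendix~\ref{apx:existence}, on which I would rely. I would also state explicitly that the passage to centered variables, and the identification of the variance-minimizing additive predictor with an orthogonal projection, both rest on translation invariance of variance, which harmlessly absorbs the constant term $f_0$ of~\eqref{eq:GAM_def}.
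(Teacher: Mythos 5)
Your proof is correct, but it takes a genuinely different route from the paper's. The paper argues algebraically: using the identity $\sigma^2(T)-\sigma^2(T-A)=2\,\mathrm{cov}(T,A)-\sigma^2(A)$ and the fact that independence forces $\mathrm{cov}(f(X),g(Y))=0$, it shows the variance-reduction functional is \emph{additive} over sums $f(X)+g(Y)$; the inequality $L^r_T(X)+L^r_T(Y)\leq L^r_T(X,Y)$ then follows by plugging in the two optimal single-variable predictors, and the reverse inequality by bounding each term of an approximating sequence $h_{i,X}(X)+h_{i,Y}(Y)$ and passing to the limit --- the sequence being needed because, per Appendix~\ref{apx:existence}, the optimal additive predictor is only guaranteed to exist in the closure $\overline{K}$. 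Your projection argument replaces the two inequalities with a single exact identity, $P_{\mathcal{H}_X+\mathcal{H}_Y}\tilde T=P_{\mathcal{H}_X}\tilde T+P_{\mathcal{H}_Y}\tilde T$ on orthogonal subspaces, followed by Pythagoras. What your route buys is a structural resolution of the existence subtlety: since two orthogonal closed subspaces have a closed sum, under independence the minimizer genuinely exists as a sum of a function of $X$ and a function of $Y$, so no limiting argument is required --- arguably cleaner than the paper's sequence argument, and it sharpens the statement of Appendix~\ref{apx:existence} in this special case. What the paper's route buys is elementarity (covariance algebra only, no projection-splitting lemma) and reuse, since the same identity $\sigma^2(T)-\sigma^2(T-A)=2\,\mathrm{cov}(T,A)-\sigma^2(A)$ is the engine of the proof of Theorem~\ref{st:redundancy}. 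Both proofs ultimately hinge on the same fact --- independence makes centered square-integrable functions of $X$ and of $Y$ orthogonal in $L^2$ --- which the paper uses implicitly when merging the two variance terms and you promote to the explicit cornerstone.
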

\noindent See Appendix~\ref{apx:cond_indep} for proof. Theorem~\ref{st:cond_indep} proves that under the GAM restriction, the predictive power of two independent variables together always equals the sum of their individual predictive powers. It also resolves Example~\ref{ex:pairwise}, as $X_1$ and $X_2$ must get the same contribution regardless of order of inclusion. Since in the example both predictor variables are also pairwise independent of $T$, both variables get a contribution score of zero: the full prediction power of the model is the result of a complex interaction, which is excluded from $E^r$, as desired.

Observe that Theorem~\ref{st:cond_indep} also applies to Examples~\ref{ex:nonlinearity} and~\ref{ex:nonadditive}. Specifically, in Example~\ref{ex:nonlinearity}, both $X_1$ and $X_2$ can be used to predict $T$ only to a certain degree, with half of its variance remaining, however, their contributions this time are equal. See Appendix~\ref{apx:example_2_e_r} for the exact calculation.
In Example~\ref{ex:nonadditive}, $E^r[T\mid X_1,X_2]=0$, therefore, similar to Example~\ref{ex:pairwise}, both  variables get zero contribution.

Finally, we turn to the question raised by~\eqref{eq:fail_leq}. With the following theorem, we prove that if the prediction function $F$ can be decomposed as $F_X(X)+F_Y(Y)$ (i.e., is a GAM), then ASV works as intended in most cases, shifting the contribution values to prefer variables earlier in the permutation $\pi$.
\begin{theorem}[Upper bound of $W^r_{\scriptscriptstyle T}$]\label{st:redundancy}
 \begin{gather*}
    W^r_{\scriptscriptstyle T}(X_1,\tdots,X_n;Y_1,\tdots,Y_m)\leq -2cov\left(F_{\scriptscriptstyle X}(X), F_{\scriptscriptstyle Y}(Y)\right)
 \end{gather*}
 assuming $\exists F_X, F_Y, cov(F_X(X), F_Y(Y))<\infty$ such that
 \begin{gather}
     F_{\scriptscriptstyle X}(X)+F_{\scriptscriptstyle Y}(Y)=E^r[T\mid X_1,\tdots,X_n,Y_1,\tdots,Y_m].
 \end{gather}
\end{theorem}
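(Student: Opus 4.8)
The plan is to unfold $W^r_{\scriptscriptstyle T}$ into its four constituent conditional variances and then exploit the optimality built into the definition of $E^r$ in~\eqref{eq:restricted_conditional_expectation}. Writing $X=(X_1,\tdots,X_n)$ and $Y=(Y_1,\tdots,Y_m)$, I would first substitute~\eqref{eq:restricted_linear_info} and~\eqref{eq:restricted_linear_interaction_info} into the definition of $W^r_{\scriptscriptstyle T}$. The three copies of $\sigma^2(T)$ coming from the three $L^r_{\scriptscriptstyle T}$ terms collapse to a single $-\sigma^2(T)$, leaving
\begin{gather*}
W^r_{\scriptscriptstyle T}(X;Y)=\sigma^2(T-E^r[T\mid X])+\sigma^2(T-E^r[T\mid Y])-\sigma^2(T)-\sigma^2(T-E^r[T\mid X,Y]).
\end{gather*}
Invoking the hypothesis $E^r[T\mid X,Y]=F_{\scriptscriptstyle X}(X)+F_{\scriptscriptstyle Y}(Y)$, the final term becomes $\sigma^2(T-F_{\scriptscriptstyle X}(X)-F_{\scriptscriptstyle Y}(Y))$.

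The key step is the minimality estimate. Because the joint minimizer $E^r[T\mid X,Y]$ is by construction a sum of univariate functions of the individual coordinates, its $X$-part $F_{\scriptscriptstyle X}(X)$ is itself an additive predictor built from $X_1,\tdots,X_n$ alone, and therefore a feasible competitor in the minimization~\eqref{eq:restricted_conditional_expectation} that defines $E^r[T\mid X]$. Optimality then gives $\sigma^2(T-E^r[T\mid X])\leq\sigma^2(T-F_{\scriptscriptstyle X}(X))$, and symmetrically $\sigma^2(T-E^r[T\mid Y])\leq\sigma^2(T-F_{\scriptscriptstyle Y}(Y))$. Since both $E^r$ terms appear with a positive sign, substituting these upper bounds yields
\begin{gather*}
W^r_{\scriptscriptstyle T}(X;Y)\leq\sigma^2(T-F_{\scriptscriptstyle X}(X))+\sigma^2(T-F_{\scriptscriptstyle Y}(Y))-\sigma^2(T)-\sigma^2(T-F_{\scriptscriptstyle X}(X)-F_{\scriptscriptstyle Y}(Y)).
\end{gather*}

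It then remains to verify that the right-hand side equals exactly $-2\,cov(F_{\scriptscriptstyle X}(X),F_{\scriptscriptstyle Y}(Y))$. This is a routine bilinear expansion: setting $A=F_{\scriptscriptstyle X}(X)$, $B=F_{\scriptscriptstyle Y}(Y)$ and applying $\sigma^2(P-Q)=\sigma^2(P)-2\,cov(P,Q)+\sigma^2(Q)$ to each term, every occurrence of $\sigma^2(T)$, $\sigma^2(A)$, $\sigma^2(B)$, $cov(T,A)$ and $cov(T,B)$ cancels, leaving precisely $-2\,cov(A,B)$. Combining this identity with the inequality from the previous paragraph closes the argument. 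Note that the slack in the bound is $\bigl[\sigma^2(T-F_{\scriptscriptstyle X}(X))-\sigma^2(T-E^r[T\mid X])\bigr]+\bigl[\sigma^2(T-F_{\scriptscriptstyle Y}(Y))-\sigma^2(T-E^r[T\mid Y])\bigr]\geq 0$, so the inequality is tight exactly when $F_{\scriptscriptstyle X}$ and $F_{\scriptscriptstyle Y}$ already solve the two single-family subproblems.

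I expect the main obstacle to be justification rather than computation: confirming that the $X$-part of the jointly optimal GAM is genuinely an admissible additive function of $X_1,\tdots,X_n$, so that the minimality estimate applies. This hinges on the additive structure of $E^r$ and, in degenerate cases where $X$ and $Y$ share measurable information, on the decomposition $F_{\scriptscriptstyle X}(X)+F_{\scriptscriptstyle Y}(Y)$ being essentially unique up to an additive constant (which leaves $cov(F_{\scriptscriptstyle X}(X),F_{\scriptscriptstyle Y}(Y))$ unchanged, variances being shift-invariant). I would also lean on the finiteness assumption $cov(F_{\scriptscriptstyle X}(X),F_{\scriptscriptstyle Y}(Y))<\infty$ together with the second-moment condition in~\eqref{eq:restricted_conditional_expectation} to keep every variance and covariance well-defined, and on Appendix~\ref{apx:existence} for the existence of the minimizer $E^r$ itself.
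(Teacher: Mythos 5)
Your proof is correct and takes essentially the same route as the paper's: both arguments rest on the optimality of $E^r[T\mid X]$ and $E^r[T\mid Y]$ against the feasible competitors $F_X(X)$ and $F_Y(Y)$, combined with a bilinear variance/covariance expansion that isolates $-2\,cov\left(F_X(X),F_Y(Y)\right)$ — you merely bound first and expand second, whereas the paper expands $L^r_T(X,Y)$ first and then applies the optimality bound. The admissibility point you flag (whether $F_X$ is genuinely an additive, square-integrable function of $X_1,\ldots,X_n$ eligible for the single-family minimization) is handled no more rigorously in the paper, which silently builds it into the notation $F_X(X)=\sum f_{X_i}(X_i)$, so your proposal is not missing anything the paper's proof supplies.
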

\noindent For a proof of the theorem, see Appendix~\ref{apx:leq_proof}.

Theorem~\ref{st:redundancy} means that as long as the covariance of $F_X(X)$ and $F_Y(Y)$ is positive, the interaction term $W$ is negative, i.e.,~\eqref{eq:fail_leq} is essentially true. Unfortunately, the theorem comes with the caveat that when the covariance is negative, the interaction \emph{can} be positive, as illustrated by the next example.
\begin{example}[Rank deficiency]\label{ex:deficient}
    Let $A,B\sim N(0,1)$ be independent joint Gaussian variables, and let
    \begin{gather}
        X_1=(0.1A+B)\textrm{, \,}
        X_2=(0.1A-B)\textrm{, and }
        T=A\textrm{.}
    \end{gather}
    Then $E[T\mid X_1, X_2]=5X_1+5X_2=T$.
\end{example}
Example~\ref{ex:deficient} is similar to the ones listed in Section~\ref{sec:asymmetric} in that the target $T$ cannot be efficiently predicted from $X_1$ or $X_2$ alone, but it can be predicted from $X_1$ and $X_2$ together. However, Example~\ref{ex:deficient}  is special in the sense that it persists even when using predictors according to $E^r$, i.e., the relations are all additive. In fact, the example remains valid even if we restrict the model family to linear regression. However, the example is quite unnatural, as the problematic behavior occurs because both variables $X_1$ and $X_2$ contain the same dominant noise variable, which can only be canceled out when observing both together. Such problems can be avoided on a theoretical level by assuming that noise terms are independent, but in practice, this might not always hold.

Answering \ref{rq3}, the GAM restriction filters out a large class of undesired complex interactions including those of Examples~\ref{ex:pairwise},~\ref{ex:nonlinearity}, and~\ref{ex:nonadditive}; however certain hard-to-interpret interactions can happen even with the restriction. Fortunately, as we see in Section~\ref{sec:experiments}, such relationships are uncommon in practice.

\section{Extension to classification}\label{sec:classification}
As stated before, our primary goal is to study ASV for the regression task. Aside from mutually exclusive events, probabilities are inherently non-additive,
implying the same for the classification task. Practical modeling approaches usually involve having the model approximate the \emph{logits} or log-odds of the probabilities, i.e., $\log \frac{p}{1-p}$, and applying the logistic sigmoid, its inverse, to the model output. Thus explaining the probabilities themselves using ASV may result in having to deal with nonlinear effects similar to Example~\ref{ex:nonlinearity}. Similarly, using GAMs for modeling probabilities directly is unlikely to result in accurate models.
At the same time,
logits are often treated as additive, e.g., in logistic regression. Therefore, we propose observing the output of the model before applying the sigmoid function, i.e., explaining the logits, as also done in~\cite{agarwal2021neural}. Using GAMs to predict logits is straightforward, with the link function $g$ of~\eqref{eq:GAM_def} chosen as the log-odds function.

Generalizing our results to classification seems a nontrivial task at first glance, as theorems~\ref{st:cond_indep} and~\ref{st:redundancy} observe the behavior of ASV through variance. Moreover, the target variable in classification is the ground-truth probabilities, which are unobservable in general. However, notice the fact that the only actual assumption made in our analysis is that the model approximates a conditional expectation in~\eqref{eq:model_cond_exp}. Further, since $f(X)=E[f(X)\mid X]$ in general, the model function can always be interpreted as a conditional expectation. Thus our approach to generalizing our results is by explaining the variance of the output of the full model when predicted using subsets of features, i.e., we set $T=f(X)$.

Ultimately, we consider the non-additivity of probabilities a limitation inherent to classification when being explained using additive explanation methods such as SHAP and ASV. However, explaining logits instead is a natural solution whenever applicable.

\section{Approximation in practice}\label{sec:rias}
In this Section, we describe how we approximate the proposed definitions in practice. Regression modeling tasks, when using the mean squared error objective function, end up approximating the conditional expectation of the target variable $E$, as described in Section~\ref{sec:varred}. To approximate $E^r$ instead, we need to train GAMs,
which we can achieve by using restricted versions of model families.

Gradient-boosted decision tree (GBDT) models~\cite{lightgbm} train an ensemble of decision tree models, where the final prediction is the sum of the predictions made by the individual trees. The training is done iteratively, always approximating the residual error. To achieve the GAM restriction, each individual tree needs to use variables of a single feature or feature group $X_i$. This way, the end result can be written as a sum of sets of trees, each depending on the value of a single $X_i$.

The function $E^r$ can be approximated using other model families as well. For example with neural networks, the restriction can be achieved by modifying the architecture of the network to represent a sum of predictors based on different features or feature groups~\cite{agarwal2021neural}. We primarily use GBDT-based models because of the tabular nature of our datasets. We run experiments implemented by LightGBM~\cite{lightgbm}, where the necessary restriction can be met using the \emph{interaction\_constraints} flag. We include further experiments using neural network models~\cite{agarwal2021neural} in our source code repository.

To evaluate ASV with conditional expectations, we follow the approach also used in
\cite{lipovetsky2001analysis}, which means training a separate model for each coalition $S$ to predict $E[T\mid X_S]$, and using these models as an approximation of $v_{f(x)}(S)$ for a given $x$ with different feature subsets.
In some experiments below, we also report contribution attributable to complex interactions, denoted by $\phi_{\mathcal{I }}$. The size $\phi_{\mathcal{I}}$ is determined by how much more powerful an unrestricted model is than its restricted counterpart.
Further analysis could be applied to attribute parts of $\phi_{\mathcal{I}}$ to different features, however, this is out of scope for this work.

\section{Experiments}\label{sec:experiments}
In this section, we demonstrate the effect of applying ASV to GAMs on real-world datasets.
Our main goals in this section are thus to see:
\begin{experimentquestions}
    \item how common complex interactions are in practice, both between variables and groups of variables;\label{eq1}
    \item whether Example~\ref{ex:deficient} manifests in practice;\label{eq2}
    \item the extent to which the predictive power of models can be attributed to complex interactions, by comparing the performance of GAMs to that of unrestricted models.\label{eq3}
\end{experimentquestions}
We make the code of our experiments publicly available\footnote{\url{https://github.com/proto-n/shap-asv-icdm}}.
\subsection{Datasets and methodology}
    We use the following datasets from the UCI Machine Learning Repository~\cite{uci_repo}. Other than unifying the target variable in the second dataset (PM2.5), no additional pre-processing is done on the UCI datasets. 
    \vspace{-\parskip}
    \begin{description}[leftmargin=1em, style=sameline]
    \item[Communities and Crime Unnormalized (CaCU).] Combines socio-economic data from the '90 Census, law enforcement data from the 1990 Law Enforcement Management and Admin Stats survey, and crime data from the 1995 FBI UCR. Semantic groups are listed in Appendix~\ref{apx:data}.\vspace{0.2em}
    \item[PM2.5 Data of Five Chinese Cities (PM2.5).] Contains air-quality data in Beijing, Shanghai, Guangzhou, Chengdu, and Shenyang with hour granularity, also including meteorological data for each city~\cite{chinese_cities}. As a unified target, we average all measurement stations into a single variable in all cities.\vspace{0.2em}
    \item[Superconductivity.] Contains features extracted from superconductors along with the critical temperature~\cite{hamidieh2018data}.\vspace{0.2em}
    \item[Productivity Prediction of Garment Employees (Garment).] Includes attributes of the garment manufacturing process and the productivity of the employees~\cite{Rahim_2021, Imran_2019}.\vspace{0.2em}
    \end{description}
    We also present  evaluation on a large proprietary dataset:
    \begin{description}[leftmargin=1em, style=sameline]
    \hyphenation{tel-co}
    \item[Mobile Telecommunications (Telco).] Real-world mobile telco. data, consisting of performance management (PM) data from radio access network cells with 15-minute granularity and configuration management (CM) data with daily granularity. We list feature groups in Appendix~\ref{apx:data}. Causality relations are displayed in Figure~\ref{fig:causality_dag}.
    \end{description}
    Relevant statistics of datasets are presented in Table~\ref{tab:datasets}. Train/validation/test splits are done in ratios of $0.8/0.1/0.1$, and evaluations are presented as measured on the test set. GBDT experiments are conducted using LightGBM~\cite{lightgbm}.

    \begin{table}[htbp]
      \caption{Datasets used.}.
      \label{tab:datasets}
      \centering
    \begin{tabular}{|l|r|r|l|}
    \hline
    \textbf{Name}&\textbf{Features} & \textbf{Rows} & \textbf{Target variable}\\
    \hline
    CaCU & 124 & 2215  & ViolentCrimesPerPop\\
    \hline
    PM2.5 & 14 & 262920  & PM\\
    \hline
    Superconductivity & 81 & 21263  & critical\_temp\\
    \hline
    Garment & 14 & 1197  & actual\_productivity \\
    \hline
    Telco & 78 & 19343000  & downlink\_throughput\\
    \hline
    \end{tabular}
    \end{table}

\subsection{Complex interaction example from the CaCU dataset}\label{sec:experiment_example}
    In this section, we demonstrate a real-world example of complex interactions on a concrete example from the CaCU dataset. Here the goal is to predict the relative number of violent crimes in communities (\emph{ViolentCrimesPerPop}) from 124 features related to income, education, age, etc.
    
    \begin{figure}[htbp]
    \begin{center}
    \centerline{\includegraphics[width=1\linewidth]{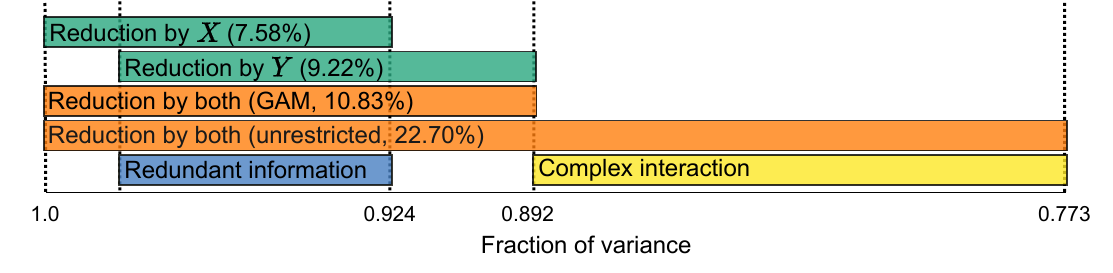}}
    \vspace{-0.5em}
    \caption{Different kinds of interactions in the example of Section~\ref{sec:experiment_example}.}\label{fig:experiment_example}
    \end{center}
    \vspace{-1.5em}
    \end{figure}
    
    To start, we take two highly correlated variables: the percentage of the population that is 12-29 in age $X$ (\emph{agePct12t29}) and the percentage of males who have never married $Y$ (\emph{MalePctNevMarr}). Either of them can be used to reduce the target variance by a considerable amount: $X$ reduces the variance by $7.58\%$, and $Y$ by $9.22\%$ when used on their own, see Figure~\ref{fig:experiment_example}.
    Because they are highly correlated ($r=0.79$), one could expect them to contain redundant information on the target. However, their combined predictive power is actually larger than the sum of their separate predictive powers, together they reduce the variance by $22.70\%$. Because of this, the ASV contribution of the second variable ends up being almost double its individual predictive power.
    
    When checking the same example using restricted models, we find that the combined predictive power of the two variables is much more in line with what we expect from two correlated variables, together they reduce the variance by $10.83\%$ under the GAM restriction. The contribution of the second variable is now reduced compared to its individual predictive power by a significant margin.

\subsection{Interactions between pairs of variables}
    
    Investigating \ref{eq1}, we measure the interactions between pairs of variables in the datasets to see the prevalence of complex interactions. In the CaCU dataset, we measure the interactions between possible pairs of features and find that complex interactions occur in a large portion of the pairs. The distribution of $W$ and $W^r$ are shown in Figure~\ref{fig:cacu_feature_heatmap} as a heatmap for the 7626 possible feature pairs.
    Figure~\ref{fig:cacu_feature_dist} presents the interactions between pairs of features for all four datasets.

    \begin{figure}[htbp]
    \vspace{-1em}
    \begin{center}
    \centerline{\includegraphics[width=1\linewidth]{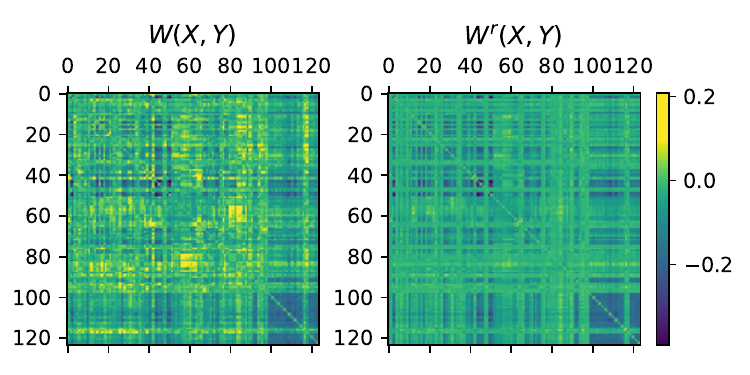}}
    \vspace{-0.5em}
    \caption{Heat-map of interactions between features in the CaCU dataset. Values are given as percentages of the original variance of $T$. Notice the relative absence of positive values on the figure on the right side, which represent complex interactions.}\label{fig:cacu_feature_heatmap}
    \end{center}
    \vspace{-0.5em}
    \end{figure}
    
    \begin{figure*}[t]
    \begin{center}
    \centerline{\includegraphics[width=\linewidth]{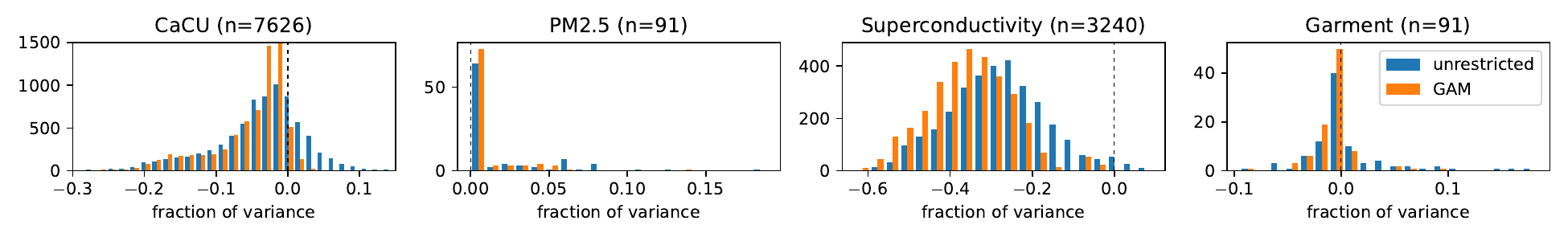}}
    \vspace{-1em}
    \caption{Histogram of interactions between pairs of features on four datasets ($n$=number of pairs). Negative values represent redundant information.}\label{fig:cacu_feature_dist}
    \end{center}
    \vspace{-1.5em}
    \end{figure*}
    
    When using the restricted models, the large majority of such interactions disappear, see again Figures~\ref{fig:cacu_feature_heatmap} and~\ref{fig:cacu_feature_dist}. The remaining positive interactions can be attributed to the small size of the dataset in both the CaCU and the Garment datasets: due to the low number of samples, measured MSE and training accuracy have relatively large variance themselves. However, in the PM2.5 dataset, some larger positive interactions remain despite the size of the dataset.
    \begin{figure}[htbp]
    \vspace{-0.5em}
    \begin{center}
    \centerline{\includegraphics[width=1\linewidth]{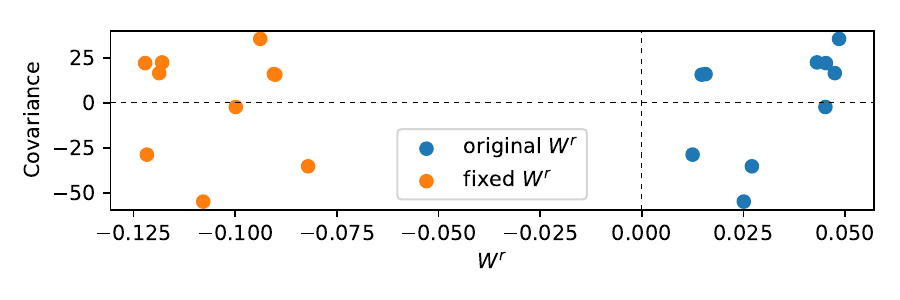}}
    \vspace{-1.5em}
    \caption{Initial and fixed covariance values in the PM2.5 dataset.}\label{fig:pm_cov}
    \vspace{-1em}
    \end{center}
    \end{figure}
    Investigating these positive interaction values, we find that many of them are \emph{not} accompanied by negative covariances between parts of the prediction model, see Figure~\ref{fig:pm_cov}, which should be impossible as per Theorem~\ref{st:redundancy}. Upon further investigation, we find that these anomalies are caused by the modeling process occasionally training sub-optimal models on the PM2.5 dataset when only a single variable is used, leading to incorrect $W^r$ values. When optionally using the single-variable function $F_Y$ of the two variable GAM models instead of individually trained single-variable models, the anomalies disappear, see Figure~\ref{fig:pm_cov}. For \ref{eq2}, overall, we do not detect examples similar to Example~\ref{ex:deficient}.
    
\begin{table*}[htbp]
      \centering
  \caption{Contributions to variance reduction for causal-attributions in the mobile telecommunications dataset}
  \label{tab:attributions}
    
    \begin{tabular}{|l|c|c|c|c|c|c|c|c|c|c|c|}
    \hline
    & $\phi_{1}$ & $\phi_{2}$ & $\phi_{3}$ & $\phi_{4}$ & $\phi_{5}$ & $\phi_{6}$ & $\phi_{7}$ & $\phi_{8}$ & $\phi_{9}$ & $\phi_{10}$ & $\phi_{\mathcal{I}}$ \\
    \hline
    \textbf{Contributions to $\sigma^2$ with GAMs}&-14.10 & 0.00 & -0.84 & -15.08 & -3.15 & -5.16 & -0.43 & -3.47 & -1.78 & -5.26 & -2.80 \\
    \hline
    \textbf{Contributions to $\sigma^2$ with unrestricted models}&-14.10 & 0.00 & -1.59 & -15.38 & -4.08 & -5.55 & -0.53 & -3.76 & -2.10 & -4.99 & 0.00 \\
    \hline
    \end{tabular}
\end{table*}

\begin{figure}[htbp]
\vspace{-1em}
\begin{center}
\centerline{\includegraphics[width=0.9\columnwidth]{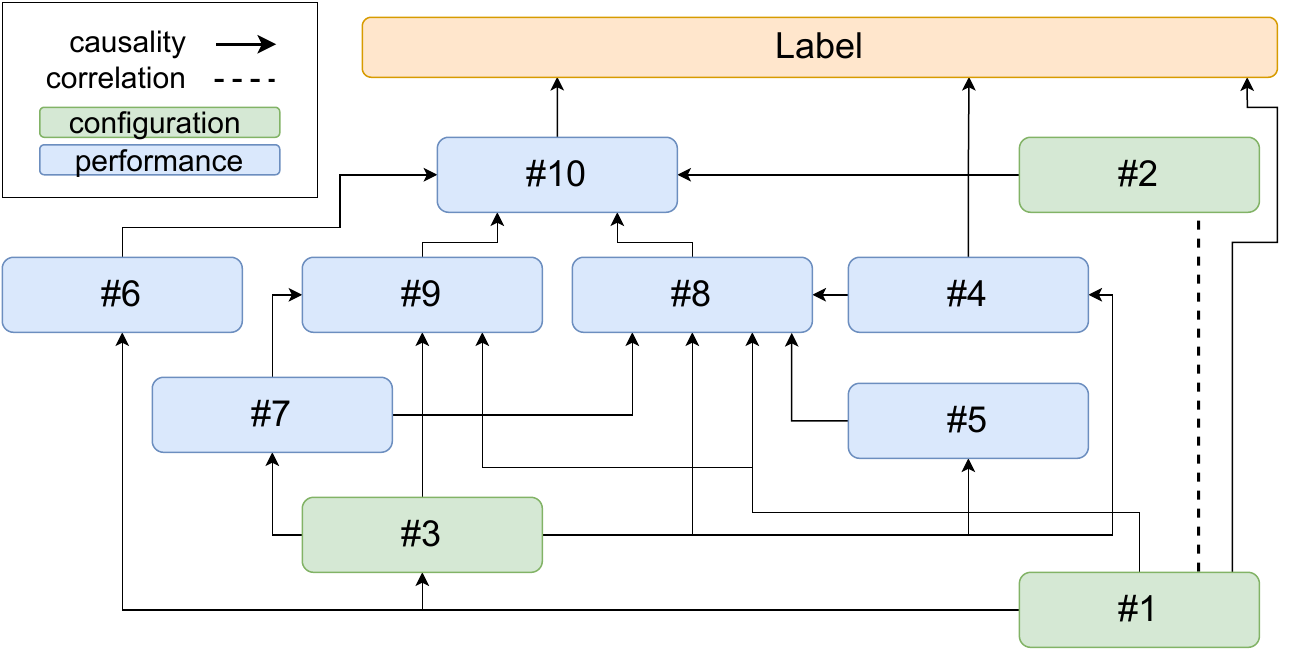}}
\vspace{-0.5em}
\caption{Causal relations between different feature groups in the mobile telecommunications dataset. See Appendix~\ref{apx:data} for variable names.}\label{fig:causality_dag}
\end{center}
\vspace{-1.0em}
\end{figure}
\subsection{Interactions between pairs of feature groups}
    \begin{figure}[htbp]
    \vspace{-0.5em}
    \begin{center}
    \centerline{\includegraphics[width=\linewidth]{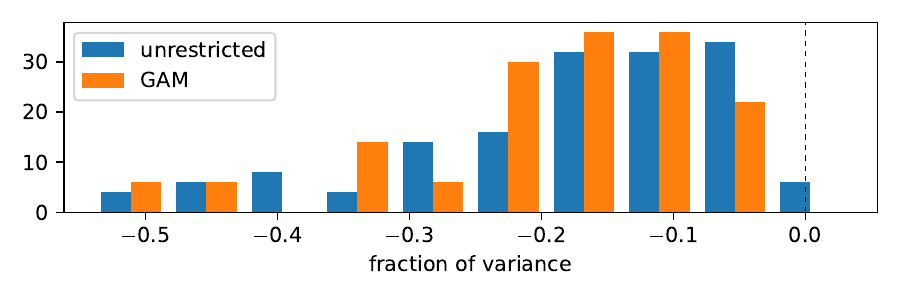}}
    \vspace{-1em}
    \caption{Histogram of interactions of pairs of semantic groups in the CaCU dataset. Negative values represent redundant information.}\label{fig:cacu_feature_dist_groups}
    \end{center}
    \vspace{-1em}
    \end{figure}
    With larger datasets, it often makes sense to group variables before defining causal relationships. In such cases, we can compute the attribution for the groups instead of single variables while allowing complex interactions within each group. To test the behavior of interactions, we group both semantically and at random. For example, we use a manually compiled semantic grouping of CaCU variables categorizing the 124 features into 13 semantic groups presented in Appendix~\ref{apx:data}.
    
    The interactions observed between pairs of variables or groups are always composed of redundant (negative) and complex (positive) interactions. A negative sum can still include positive interaction terms. In a restricted model, we expect no positive interactions and negative interactions to change to even more negative. We display the distribution of interactions between groups
    on the CaCU dataset in Figure~\ref{fig:cacu_feature_dist_groups}.

    \begin{table}[htbp]
      \centering
    \caption{The unexplained fraction of variance with different groupings of variables using GAMs vs. the unrestricted baseline. 
    }
    \label{tab:group_predictive_powers}
    \begin{tabular}{|l|r|r|}
    \hline
    \textbf{Grouping} & \textbf{\# of groups} & Remaining variance \%\\
    \hline
    \multicolumn{3}{|l|}{\textbf{Communities and Crime Unnormalized}}\\
    \hline
    Features as groups&124&0.4062\\
    \hline
    Random groups of size 6&20&0.3740\\
    \hline
    Semantic groups&13&0.3856\\
    \hline
    Unrestricted model&1&0.3803\\
    \hline
    \multicolumn{3}{|l|}{\textbf{PM2.5 Data of Five Chinese Cities}}\\
    \hline
    Features as groups&14&0.8327\\
    \hline
    Random groups of size 3&4&0.7336\\
    \hline
    Unrestricted model&1&0.6251\\
    \hline
    \multicolumn{3}{|l|}{\textbf{Superconductivity}}\\
    \hline
    Features as groups&81&0.1775\\
    \hline
    Random groups of size 6&13&0.1193\\
    \hline
    Unrestricted model&1&0.0977\\
    \hline
    \multicolumn{3}{|l|}{\textbf{Productivity Prediction of Garment Employees}}\\
    \hline
    Features as groups&14&0.6151\\
    \hline
    Random groups of size 3&4&0.5761\\
    \hline
    Unrestricted model&1&0.5533\\
    \hline
    \multicolumn{3}{|l|}{\textbf{Mobile Telecommunications}}\\
    \hline
    Semantic groups&10&0.4722\\
    \hline
    Unrestricted model&1&0.4422\\
    \hline
    \end{tabular}
    \end{table}

    For \ref{eq3}, Table~\ref{tab:group_predictive_powers} presents the predictive power of GAMs with different groupings of variables along with unrestricted models. We observe that a varying amount of predictive power is based on complex interactions, depending on the dataset. However, in the two examples with semantic groups, the performance decrease in the restricted model is very low, especially compared to using a GAM over individual features.

\subsection{Causality-aware attributions}

We test causality-aware contributions on the Telco dataset based on a causality graph of the variables established with the help of domain experts, see Figure~\ref{fig:causality_dag} and Appendix~\ref{apx:data}. Each topological ordering of the causality graph corresponds to a permutation $\pi$ for ASV. There are 1134 possible topological orderings of the causality graph with 62 distinct starting subsets of feature groups. In practice, this means training 62 models, which is feasible in our case.

Contributions to variance reduction are presented in Table~\ref{tab:attributions} for ASV with and without the GAM restrictions. 
Compared to the overall variance reduction from  $93.35$ to $41.28$, both the amount of complex interactions ($\phi_\mathcal{I}=-2.8$) and the  difference between the two kinds of models are modest, which indicates a correct causal order and lack of confounders. However, we do observe relatively large changes in some of the contributions, indicating that complex interactions indeed occur. From the small but substantial differences, domain experts can identify unexpected interactions among components at different levels of the causal graph.

\section{Conclusions}\label{sec:conclusions}
In this work, we investigated Asymmetric Shapley Values (ASV) for root-cause analysis.
We formalized ASV feature attributions in terms of variance reduction and presented examples where ASV produces counter-intuitive results. We proposed using ASV in conjunction with generalized additive models (GAMs), and proved multiple results about the joint behavior, with Theorem~\ref{st:redundancy} giving a sufficient condition for correct behavior.

We conclude that although ASV generally performs well, it may incorrectly assign contributions in various situations involving complex relationships in real-world datasets. By utilizing GAMs, most of these problematic cases can be mitigated; however, it is still possible for issues to arise under unusual circumstances. Therefore, we recommend considering the potential occurrence of the problematic cases outlined in this paper  when applying ASV. Nevertheless, when such cases are absent, ASV remains a highly effective approach for assigning contributions with known causal relationships.

\bibliographystyle{IEEEtran}
\bibliography{paper}

\section{Appendix}\label{sec:appendix}
\subsection[Existence of the restricted expectation]{Existence of $E^r$}\label{apx:existence}
As described in~\cite{brockwell2009time}, if
\begin{equation*}
    M = \{ g(Y) \mid g \text{ is a measurable  function and }\operatorname{E}(g(Y)^2)\!<\!\infty \},
\end{equation*}
then $M$ is a closed subspace of $L^2(\Omega)$. Let us define
\begin{equation}
\begin{split}
    & K = \Bigg\{ f(X) \,\Big|\, f(X) = \sum_{i=1}^{n} f_i(X_i);
    \mbox{$f$, $f_i$ are} \\ & 
    \mbox{ measurable and $E(f_i(X_i)^2) <\infty$, $i=1,2,\ldots,n$} \Bigg\}.  \end{split}
\end{equation}
Then $K\subseteq \overline{K}\subseteq M$, where $\overline{K}$ is the closure of $K$.
The function $f$ such that $E^r(T\mid X)=f(X)$ always exists with $E(f(X)^2)<\infty$, if we define $E^r(T \mid X_1,\tdots,X_n)$ as
\begin{equation}
    E^r(T\mid X_1, \tdots, X_n) = \argmin_{f(X) \in \overline{K}} \sigma^2(T-f(X))\textrm{.}
\end{equation}
Notice that this does not guarantee $\exists f_i : E^r(T\mid X)= f(X)=\sum f_i(X_i)$. In Theorem~\ref{st:redundancy}, an additional assumption is made to account for this.

\subsection[Lemma about variance reduction]{Lemma about $L_T$}\label{apx:hilbert}
\begin{lemma}\label{st:l_r_t}
\begin{gather}
   L_T(X) = \sigma^2(T) - \sigma^2(T-E(T\mid X)) = \sigma^2(E(T\mid X))
\end{gather}
\end{lemma}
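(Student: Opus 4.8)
The plan is to recognize the statement as the classical variance decomposition (the law of total variance) and to prove it by orthogonal projection in $L^2$, reusing the Hilbert-space setup of Appendix~\ref{apx:existence}. The first equality is just the definition of $L_T$ in~\eqref{eq:linear_info}, so all the content lies in the second equality $\sigma^2(T) - \sigma^2(T - E[T\mid X]) = \sigma^2(E[T\mid X])$. I would work in $L^2(\Omega)$ with inner product $\langle U,V\rangle = E[UV]$ and recall that $E[T\mid X]$ is the orthogonal projection of $T$ onto the closed subspace $M$ of square-integrable $X$-measurable functions, exactly the space $\overline{K}$-type object introduced in Appendix~\ref{apx:existence}.

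Writing $\mu = E[T]$ and $\hat T = E[T\mid X]$, I would decompose $T - \mu = (T - \hat T) + (\hat T - \mu)$ and invoke the Pythagorean identity. The key step is the orthogonality $T - \hat T \perp \hat T - \mu$: since constants lie in $M$ we have $\hat T - \mu \in M$, and the residual $T - \hat T$ is orthogonal to every element of $M$ by the defining property of conditional expectation. Concretely, conditioning on $X$ gives $E[(T - \hat T)(\hat T - \mu)] = E[(\hat T - \mu)\,E[T - \hat T\mid X]] = 0$, because $E[T - \hat T\mid X] = 0$. Hence $\|T - \mu\|^2 = \|T - \hat T\|^2 + \|\hat T - \mu\|^2$.

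To finish, I would identify each squared norm with the corresponding variance using the tower property: $E[\hat T] = \mu$ and $E[T - \hat T] = 0$, so the three terms equal $\sigma^2(T)$, $\sigma^2(T - \hat T)$, and $\sigma^2(\hat T)$ respectively, and rearranging gives the claim. The only delicate point, though a mild one, is the orthogonality argument together with checking that each $L^2$ norm genuinely coincides with a variance; both reduce to the vanishing of the relevant means supplied by the law of total expectation, while the remaining manipulations are routine.
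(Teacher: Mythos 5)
Your proof is correct, and it rests on the same geometric fact as the paper's: the residual $T - E[T\mid X]$ is orthogonal in $L^2$ to square-integrable functions of $X$. The packaging, however, is genuinely different. The paper expands $\sigma^2(T)-\sigma^2(T-E[T\mid X])$ through covariances, reaching $\mathrm{cov}(T,E[T\mid X]) + \mathrm{cov}(T-E[T\mid X],\,E[T\mid X])$, and kills the second term by invoking the Hilbert projection theorem: $E[T\mid X]$ is characterized as the minimizer of $\sigma^2(T-g(X))$ over measurable $g$, so its residual is uncorrelated with every competitor $f(X)$. You instead center at $\mu=E[T]$, decompose $T-\mu=(T-\hat T)+(\hat T-\mu)$, prove the orthogonality directly from the tower property $E[T-\hat T\mid X]=0$, and finish with the Pythagorean identity, identifying the three squared norms with variances via the vanishing means. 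Your route is more elementary and self-contained, since it needs only the defining property of conditional expectation rather than the variational characterization plus the projection theorem. What the paper's formulation buys in exchange is uniformity with the rest of its machinery: the ``residual is uncorrelated with / maximal over the competitor class'' device is exactly what drives the proofs of Theorems~\ref{st:cond_indep} and~\ref{st:redundancy} for the restricted expectation $E^r$ of Appendix~\ref{apx:existence}, where no tower property is available and only the minimality characterization applies. So your argument is a clean substitute for this lemma as stated (which concerns the genuine conditional expectation $E$), but it would not transfer to the $E^r$ analogues, whereas the paper's template does.
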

\begin{proof}
\begin{gather}
   L_T(X) = \sigma^2(T) - \sigma^2(T-E(T\mid X))\\
   = 2cov(T,E(T\mid X)) - \sigma^2(E(T\mid X))\\
   = 2cov(T,E(T\mid X)) - cov(E(T\mid X), E(T\mid X))\\
   = cov(T, E(T\mid X)) + cov(T-E(T\mid X), E(T\mid X))\label{eq:cancel}\\
   = cov(T, E(T\mid X)) = \sigma^2(E(T\mid X))
\end{gather}
The second term of \eqref{eq:cancel} is zero because we know that $cov(T-g(T), f(T))$ is zero for all $f$ if $g$ minimizes $\min_g \sigma^2(T-g(T))$ which follows from the Hilbert projection theorem.
\end{proof}
\subsection[Contributions in Example~\ref{ex:nonlinearity} using conditional expectation]{Contributions in Example~\ref{ex:nonlinearity} using $E$}\label{apx:example_2}
We are going to calculate $\phi_1'$ and $\phi_2'$, the variance reductions assigned to variables $X_1$ and $X_2$ using the ASV distal weighting scheme, using the permutation $(X_1, X_2)$. First,
\begin{gather}\label{eq:e_t_mid_x1}
    E(T\mid X_1)=E(4X_1^2\mid X_1) + E(8X_1X_2\mid X_2) +{}\\{}+ E(4X_2^{2}\mid X_1)
    = 4X_1^2 + 0 + E(4X^2_2) = 4X_1^2 + c.
\end{gather}
It is known that if $Z\sim N(0,s^2)$, then $\sigma^2(Z^2)=2s^4$, thus
\begin{gather}
    \phi'_1 = -L_T(X_1) = \sigma^2(T-E(T\mid X_1))-\sigma^2(T)\\= -\sigma^2(4X_1^2) = -16\,\sigma^2(X_1^2)=-32
\end{gather}
because of Lemma~\ref{st:l_r_t} and
$
    E(T\mid X_1,X_2) = T.
$
Since $T=(2X_1+2X_2)^2$ where $(2X_1+2X_2)\sim N(0, 8)$,
\begin{gather}
    \phi'_0 = \sigma^2(T)=128\textrm{ and}\\
    \phi'_0 + \phi'_1 + \phi'_2 = 128 - 32 + \phi'_2 = \sigma^2(T\mid X_1, X_2) = 0\\\textrm{therefore }
    \phi'_2 =-96\textrm{.}
\end{gather}
\subsection[Contributions in Example~\ref{ex:nonlinearity} using restricted conditional expectation]{Contributions in Example~\ref{ex:nonlinearity} using $E^r$}\label{apx:example_2_e_r}
With using $E^r$, the behavior is a bit different. We still have
    $\phi'_0 = \sigma^2(T)=128$ and 
    $E^r(T\mid X_1) = 4X_1^2$, thus
    $\phi'_1 =-32$,
however $E^r(T\mid X_1, X_2)\neq E(T\mid X_1, X_2)$. The best restricted prediction we can wish for is the sum of the individual expectations, i.e., $
    E^r(T\mid X_1,X_2) = E(T\mid X_1) + E(T\mid X_2),
$
which is the case if $cov(E(T\mid X_1), E(T\mid X_2)) = 0$. Since $E^r(T\mid X_1) = 4X_1^2+c_1$ and $E^r(T\mid X_2) = 4X_2^2+c_2$, they are indeed uncorrelated, so
$E^r(T\mid X_1,X_2) = 4X_1^2 + 4X_2^2.$
From this, $E((T-E^r(T\mid X_1, X_2))^2)=\sigma^2(8X_1X_2)=64$,
and we can determine $\phi_2$ as
\begin{gather*}
    \phi'_0 + \phi'_1 + \phi'_2 = 128 - 32 + \phi'_2 = \sigma^2(T\mid X_1, X_2) = 64,
\end{gather*}
therefore $\phi'_2 =-32$. This means that with $E^r$, both variables get equal contribution. In this case, $\phi_{\mathcal{I}}$, defined in Section~\ref{sec:rias}, gets the most contribution: $\phi_{\mathcal{I}} = -64.$

\subsection{Proof of Theorem~\ref{st:cond_indep}}\label{apx:cond_indep}
\begin{proof}
    Because of the independence of $X$ and $Y$, for any functions $f(X)$ and $g(Y)$
    \begin{gather}
        \left[\sigma^2(T)-\sigma^2(T-f(X))\right] + \left[\sigma^2(T)\!\!- \sigma^2(T-g(X))\right]\\
        \!\!\!\!=\!2cov(T,\!f(X))\!-\!\sigma^2(f(X))\!+\!2cov(T,\!g(Y))\!-\!\sigma^2(g(Y))\!\\= 2cov(T,f(X)+g(Y))-\sigma^2(f(X)+g(Y))\\
        =\left[\sigma^2(T)-\sigma^2(T-(f(X)+g(Y)))\right].
    \end{gather}
    This trivially implies $L^r_T(X) + L^r_T(Y)\leq L^r_T(X,Y)$, as the function $L^r_T(X,Y)=\sigma^2(T)-\sigma^2(T-h(X,Y))$ is maximal for functions of the form $h(X,Y)=h_X(X)+h_Y(Y)$.
    
    We can also use it to prove $L^r_T(X) + L^r_T(Y)\geq L^r_T(X,Y)$, however some care needs to be taken due to the considerations of Appendix~\ref{apx:existence}. To do this, let us take a series of measurable square-integrable $h_i,h_{i,X}(X),h_{i,Y}(Y)$ functions such that
    \begin{gather}
        L^r_T(X,Y) = \lim_{i\rightarrow \infty}\left[ \sigma^2(T)-\sigma^2(T-h_i(X,Y))\right]\\=
        \lim_{i\rightarrow \infty}\left[ \sigma^2(T)-\sigma^2(T-(h_{i,X}(X)+h_{i,Y}(Y)))\right].
    \end{gather}
    Since $L^r_T(X), L^r_T(Y)$ are also maximal for for functions of the given form, this means that for each term of the series $h_i$
    \begin{equation}
        \!\!\!\sigma^2(T)-\sigma^2(T-(h_{i,X}(X)+h_{i,Y}(Y))\!\leq\!L^r_T(X)\!+\!L^r_T(X),
    \end{equation}
    which implies that the same is also true for the limit, i.e.
    \begin{equation}
        L^r_T(X,Y)\leq L^r_T(X) + L^r_T(X).\nonumber
    \end{equation}
    \vspace{-1em}
\end{proof}

\subsection{Proof of Theorem~\ref{st:redundancy}}\label{apx:leq_proof}
    \begin{proof}
    Let us introduce the notations
    \begin{gather}
        E^r(T\mid X_1,\tdots,X_n,Y_1,\tdots,Y_n)=F_X(X)+F_Y(Y)\\=\sum f_{X_i}(X_i) + \sum f_{Y_i}(Y_i)= F\\
        \!\!E^r(T\mid X_1,\tdots,X_n)=G,\textrm{ and }
        E^r(T\mid Y_1,\tdots,Y_m)=H.
    \end{gather} Since
    $
        \sigma^2(T)-\sigma^2\left(T-A\right) = 2cov(T, A)-\sigma^2(A)\textrm{,}
    $
    thus
    \begin{gather}
        \!\!\!\!\!\!\!L^r_{\scriptscriptstyle T}(X_1,\tdots,X_n\!)\!=\!\sigma^2(T)\!-\!\sigma^2(T\!\!-\!G)\!=\!2cov(T,\!G)\!-\!\sigma^2(G)\!\!\nonumber\\
        \!\!\!\!\!\!\!L^r_{\scriptscriptstyle T}(X_1,\tdots,X_n\!)\!=\!\sigma^2(T)\!-\!\sigma^2(T\!\!-\!H)\!=\!2cov(T,\!H)\!-\!\sigma^2(H)\!\!\!\nonumber
    \end{gather}
    and
    \begin{gather}
     L^r_{\scriptscriptstyle T}(X_1,\tdots,X_n,Y_1,\tdots,Y_m) = \sigma^2(T) - \sigma^2(T - F)\\= 2cov(T, F) - \sigma^2(F)\\
     =2cov\left(T, \sum f_{X_i}(X_i)\right) - \sigma^2\left(\sum f_{X_i}(X_i)\right)+{}
     \\
     {}+ 2cov\left(T, \sum f_{Y_j}(Y_j)\right) - \sigma^2\left(\sum f_{Y_j}(Y_j)\right) - {} \\
     {} - 2cov\left(\sum f_{X_i}(X_i), \sum f_{Y_j}(Y_j)\right).
    \end{gather}
    Thus
    \begin{gather}
        W^r_{\scriptscriptstyle T}(X_1,\tdots,X_n;Y_1,\tdots,Y_m)\\
     =\left(2cov\left(T, \sum f_{X_i}(X_i)\right) - \sigma^2\left(\sum f_{X_i}(X_i)\right)\right) -{}\\{}-(2cov(T, G)-\sigma^2(G))\label{eq:minimal1}+{}\\
     {}+
     \left(2cov\left(T, \sum f_{Y_i}(Y_i)\right) - \sigma^2\left(\sum f_{Y_i}(Y_i)\right)\right) -{}\\{}-(2cov(T, H)-\sigma^2(H))\label{eq:minimal2}+{}\\
     {}+
     \left(-2cov\left(\sum f_{X_i}(X_i), \sum f_{Y_i}(Y_i)\right)\right).
    \end{gather}
    Since~\eqref{eq:minimal1} and~\eqref{eq:minimal2} are maximal for such expressions, i.e.,
    \begin{gather}
        2cov\left(T, \sum f_{X_i}(X_i)\right) - \sigma^2\left(\sum f_{X_i}(X_i)\right)\\\leq 2cov(T, G)-\sigma^2(G), \textrm{thus}\\
        W^r_{\scriptscriptstyle T}(X_1,\tdots,X_n;Y_1,\tdots,Y_m) + c\\= -2cov\left(\sum f_{X_i}(X_i), \sum f_{Y_i}(Y_i)\right)
    \end{gather}
    for some $c\geq 0$, which proves the proposition. 
    \end{proof}
\subsection{Datasets and Variables}\label{apx:data}
In the Communities and Crime Unnormalized dataset, we define the following feature groups: Race, Age, Income, Race/Income, Education, Family, Immigration, Housing, Homelessness, Native, Police, Race/Police, Land/Population.

In the Telecommunications dataset, the feature groups indicated by $\phi_1, \tdots, \phi_{10}$ are, respectively: Spectrum (1); Antennas, MIMO and modulations (2); Dimensioning (3); Cell load (4); Neighbor cell load (5); UE capability distribution (6); TA distribution (7); Interference (8); Path loss (9); Channel quality (10) and the label is Downlink Throughput. Causal relations are displayed in Figure~\ref{fig:causality_dag}.

For full description of the groups in both the Telco and CaCu datasets, please refer to the source repository.

\end{document}